\newtheorem{thm}{Theorem}
\newtheorem{definition}[thm]{Definition}
\newtheorem{assumption}{Assumption}
\gdef\@copyrightpermission{
  \begin{minipage}{0.2\columnwidth}
   \href{https://creativecommons.org/licenses/by/4.0/}{\includegraphics[width=0.90\textwidth]{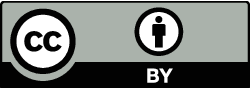}}
  \end{minipage}\hfill
  \begin{minipage}{0.8\columnwidth}
   \href{https://creativecommons.org/licenses/by/4.0/}{This work is licensed under a Creative Commons Attribution International 4.0 License.}
  \end{minipage}
  \vspace{5pt}
}
\title[AAMAS-2025 Formatting Instructions]{Multi-Objective Planning with Contextual Lexicographic \\Reward Preferences}
\author{Pulkit Rustagi}
\affiliation{
   \institution{Oregon State University}
  \city{Corvallis, OR}
  \country{USA}}
\email{rustagip@oregonstate.edu}
\author{Yashwanthi Anand}
\affiliation{
  \institution{Oregon State University}
  \city{Corvallis, OR}
  \country{USA}}
\email{anandy@oregonstate.edu}
\author{Sandhya Saisubramanian}
\affiliation{
   \institution{Oregon State University}
  \city{Corvallis, OR}
  \country{USA}}
\email{sandhya.sai@oregonstate.edu}
\begin{abstract}
Autonomous agents are often required to plan under multiple objectives whose preference ordering varies based on \emph{context}. The agent may encounter multiple contexts during its course of operation, each imposing a distinct lexicographic ordering over the objectives, with potentially different reward functions associated with each context. Existing approaches to multi-objective planning typically consider a single preference ordering over the objectives, across the state space, and do not support planning under \emph{multiple objective orderings within an environment}. We present Contextual Lexicographic Markov Decision Process (CLMDP), a framework that enables planning under varying lexicographic objective orderings, depending on the context. In a CLMDP, both the \emph{objective ordering} at a state and the associated \emph{reward} functions are determined by the context. We employ a Bayesian approach to infer a state-context mapping from expert trajectories. Our algorithm to solve a CLMDP first computes a policy for each objective ordering and then combines them into a single context-aware policy that is valid and cycle-free. The effectiveness of the proposed approach is evaluated in simulation and using a mobile robot. 
\end{abstract}
\keywords{Context-aware planning; multi-objective planning.}
\newcommand{\BibTeX}{\rm B\kern-.05em{\sc i\kern-.025em b}\kern-.08em\TeX}
\begin{document}

\pagestyle{fancy}
\fancyhead{}

\maketitle

\section{Introduction}
\label{sec:introduction}
In many real-world applications, such as navigation~\cite{smith2003fuzzy,fujimura1996path}, and warehouse management~\cite{reehuis2010mixed,gao2022two}, autonomous agents must optimize multiple, potentially competing objectives. The priority over these objectives can be conveniently expressed via a lexicographic ordering, and may vary based on the \emph{context}. Consider a semi-autonomous car navigating in a city (Figure~\ref{fig:context_illustration}), where the possible set of objectives includes minimizing travel time, ensuring pedestrian safety, and minimizing going over uneven road surfaces. When the car navigates through a construction zone, it must prioritize minimizing going over uneven surfaces, followed by pedestrian safety, and finally the speed. When the car is on a highway, it is desirable to prioritize travel time over other objectives, since a highway is designed for high-speed navigation and is typically free of pedestrian traffic and other obstacles. When in urban areas with high foot traffic, ensuring pedestrian safety becomes the highest priority.
Thus, during the course of its navigation, the car must optimize different \emph{objective orderings}, corresponding to the context associated with the region. The reward functions associated with the objectives may also vary based on context. 
\begin{figure}[t]
    \centering
   \includegraphics[width=\linewidth,trim={0.3cm 0.4cm 0.7cm 0},clip]{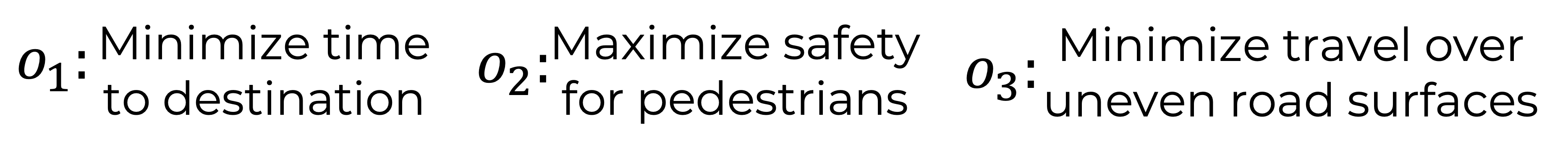}
   \includegraphics[scale=0.38]{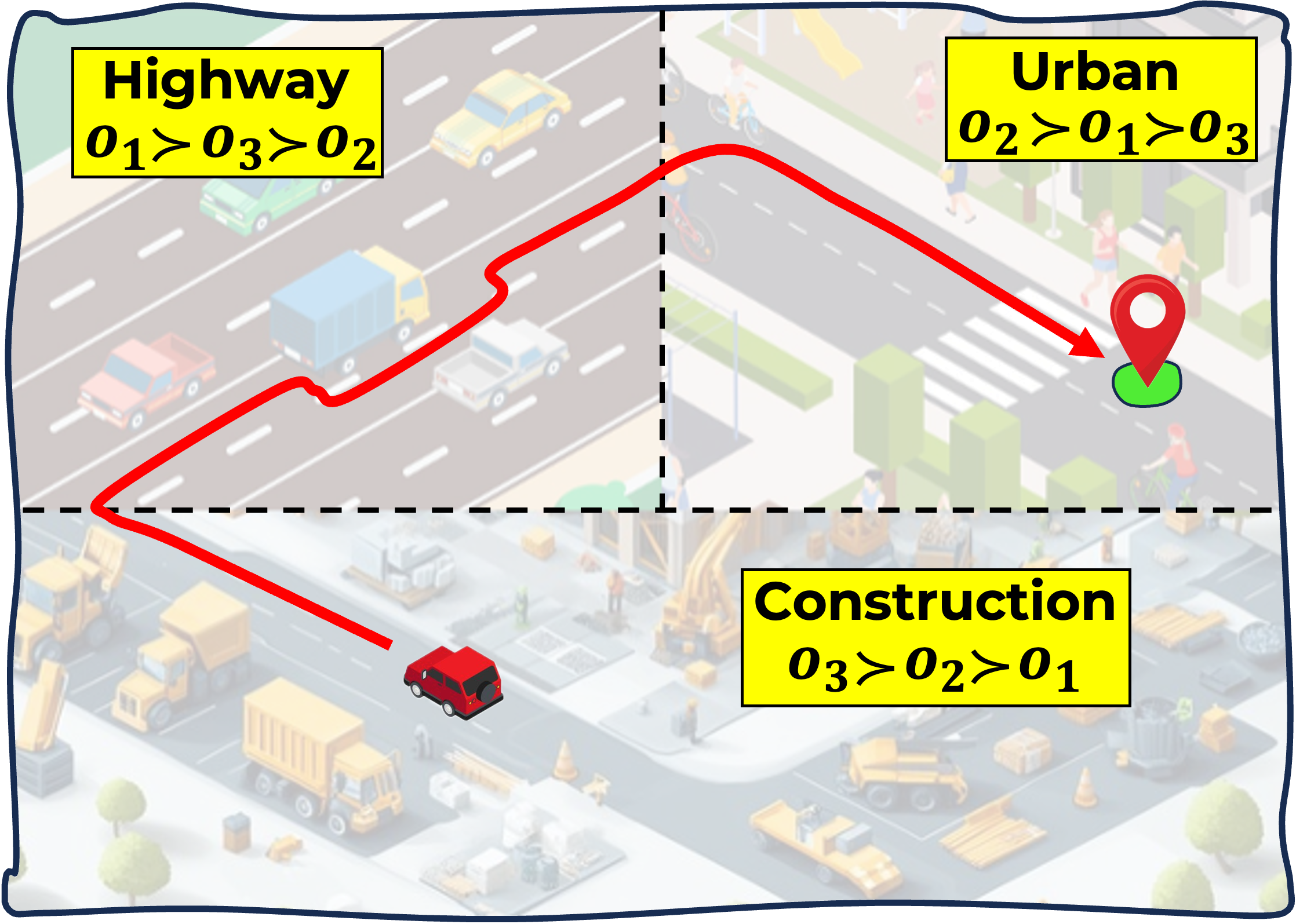}
    \caption{Example of a car navigating in a city environment with three different contexts (\emph{urban}, \emph{highway}, and \emph{construction}), each imposing a unique 
    ordering over the objectives.}
\label{fig:context_illustration}
\end{figure}
Using contextual information for decision-making has often proven to be valuable, such as in contextual bandits~\cite{li2010contextual}, context-aware planning~\cite{kim2023context}, contextual information retrieval and text generation~\cite{hambarde2023information,wang2024utilizing}. The existing literature often defines context as a parameter that influences the environment dynamics and rewards~\cite{benjamins2022contextualize,kim2023context,modi2018markov}, the position of obstacles and other agents~\cite{hvvezda2018context,ter2010context}, or the area of operation~\cite{bahrani2008collaborative}. We define context as a set of \emph{exogenous features} in the state representation that determine the objective ordering and corresponding reward functions for each state.

We consider a lexicographic ordering over objectives, which is a convenient approach for representing relative preferences over multiple objectives. 
Traditional approaches for multi-objective decision-making typically support only a single preference ordering 
over objectives throughout the entire state space. 
A prior work~\cite{wray2015multi} that supports planning with multiple state partitions could potentially be adapted to handle different priority orderings. However, it lacks a systematic method for defining partitions and does not provide an explicit mapping from states to their respective objective orderings. Their framework also does not support varying reward functions for each partition or ensure that the resulting policy is valid and cycle-free.
A recent work~\cite{yang2019generalized} uses a context map to
track the scalarization weights for each objective in every state. However, it does not scale
to larger state spaces.

Our paper addresses these challenges by introducing \emph{Contextual Lexicographic Markov Decision Process} (CLMDP), a framework that supports seamless switching between different objective orderings, determined by the state's context, during the course of agent operation.  We present an algorithm to solve CLMDP by first computing a policy for each objective ordering across the state space, and then combining them into a single context-aware policy, given a state-context mapping. Our algorithm also ensures that the action recommendations across contexts do not conflict with each other, during the merge step. When state-context mapping is unknown, it is derived using Bayesian inference and limited expert trajectories.

\begin{figure*}[t]
    \centering
    \includegraphics[width=0.96\linewidth]{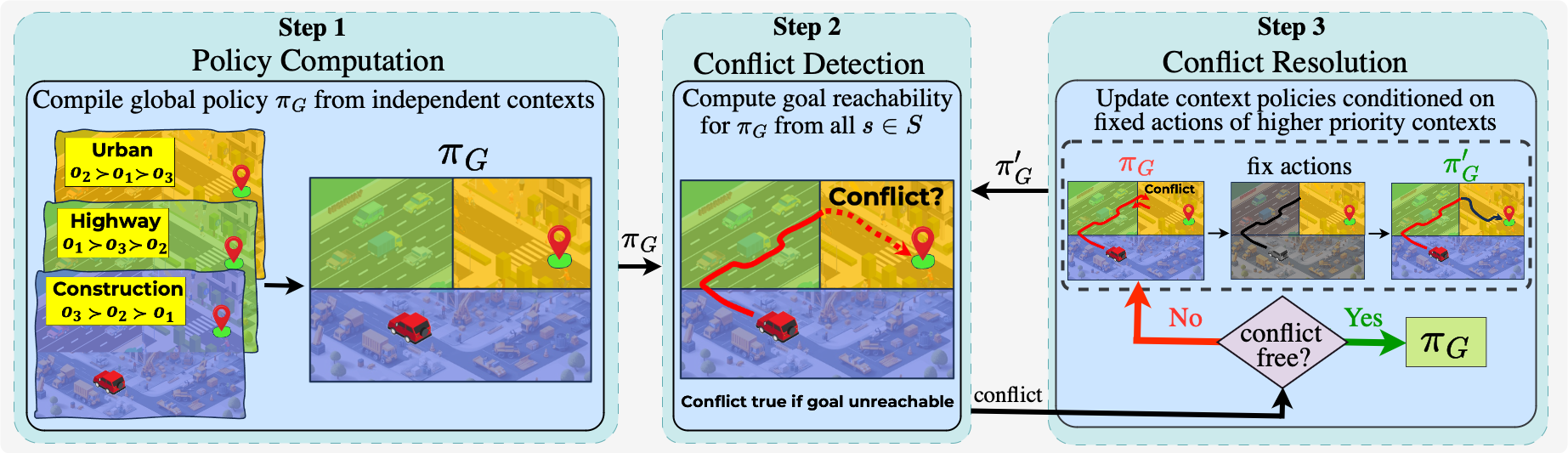}
    \caption{Overview of our solution approach for contextual planning. First, policies are calculated for each context in isolation, across the entire state space, and then compiled into a global policy $\pi_G$ by mapping actions to states based on each state's associated context. Second, $\pi_G$ is analyzed for cycles by estimating goal reachability from each state. Finally, the detected conflicts are resolved by updating lower priority context policies conditioned on fixed actions of higher priority contexts. }
    \label{fig:contextual_approach_overview}
\end{figure*}

Our primary contributions are: (1) formalizing the problem of contextual multi-objective planning as a contextual lexicographic MDP; (2) presenting an algorithm to solve a CLMDP, producing a policy that is free of conflicts in a setting where multiple contexts coexist; (3) theoretical analysis of the algorithm's performance; (4) a Bayesian approach to infer the state-context mapping; and (5) empirical evaluation in simulation and using a physical robot.

\section{Related Work}
\paragraph{Multi-objective planning} Multi-objective planning is gaining increasing attention. Common approaches to solve multi-objective planning problems include scalarization~\cite{wilde2024scalarizing,agarwal2022multi,van2013scalarized}, lexicographic planning~\cite{wray2015multi,saisubramanian2021multi,mouaddib2004multi,skalse2022lexicographic}, constraint optimization~\cite{deb2016multi,deb2001nonlinear}, and Pareto optimization~\cite{aydeniz2024entropy,nickelson2024redefining}. \emph{Scalarization} combines multiple objectives into a single objective value using weighted sum. While it is a popular approach, it often requires non-trivial parameter tuning~\cite{roijers2013survey}. Our work is based on the \emph{lexicographic} formulation that considers a lexicographic ordering over objectives and solves them sequentially.
While the lexicographic Markov decision process (LMDP)~\cite{wray2015multi} can support multiple state partitions, each corresponding to a different objective ordering, it suffers from the following limitations: (1) lack of a principled approach to define state partitions, especially when a state could be associated with multiple contexts such as weather, road conditions, and time of day; (2) hard-coded partitions and lack of explicit state-to-context mapping which makes it difficult to adapt to newer settings where the set of contexts may change; (3) does not support scenarios where the context influences the reward function; and (4) lack of tools to resolve conflicts in action recommendation that arise from solving partitions sequentially, with no principled approach to determine the order. 
The CLMDP framework addresses these shortcomings and facilitates smooth transitions between contexts, avoiding the risk of conflicting actions. The explicit state-to-context mapping in CLMDP offers flexibility and scalability in handling new scenarios, \emph{without} requiring manual interventions to redefine partitions or objective priorities. 

\paragraph{Constraint optimization} approach optimizes a single objective and converts other objectives into constraints~\cite{deb2016multi,deb2001nonlinear}. This approach cannot efficiently balance the trade-offs between different objectives~\cite{deb2010efficient}.
\emph{Pareto} optimization finds non-dominated solutions across objectives~\cite{aydeniz2024entropy,nickelson2024redefining}, making it unsuitable for scenarios where specific preference orderings must be satisfied.

\paragraph{Context-aware planning} 
Prior works use contextual information to determine environment dynamics and rewards~\cite{benjamins2022contextualize,kim2023context,modi2018markov}, or represent specific configurations like obstacle layouts and operational areas~\cite{bahrani2008collaborative,hvvezda2018context,ter2010context}. While many definitions of context exist for different problem settings, we focus our discussion on those most pertinent to multi-objective scenarios. In multi-objective settings, context has been used to assign scalarization weights~\cite{yang2019generalized} but this approach struggles to scale in larger state spaces. We integrate contextual information in a lexicographic setting to enable planning with different objective ordering in different regions of the state space, with associated reward functions determined by the context. 

\paragraph{Bayesian Inference} It is commonly used to infer values of unknown parameters by incorporating prior knowledge and updating beliefs as new information becomes available~\cite{belakaria2020uncertainty,zhi2020online,abdolshah2019multi}.
Bayesian inference is often used in 
inverse reinforcement learning (IRL) to estimate reward functions~\cite{ramachandran2007bayesian,pmlr-v119-brown20a}, and goal inference in multi-agent settings~\cite{ullman2009help,zhi2020online}. In multi-objective optimization problems, Bayesian methods are used to approximate the set of Pareto optimal solutions for competing objectives~\cite{belakaria2020uncertainty,pmlr-v162-daulton22a}, and identify solutions that satisfy user-defined preference ordering over competing objectives~\cite{abdolshah2019multi}.
In this work, we apply Bayesian inference to determine the most likely context of a state, using limited number of expert trajectories.

\section{Contextual Lexicographic MDP}
Consider an agent operating in an environment with multiple  objectives modeled as a  multi-objective Markov decision process (MOMDP). The agent must optimize a lexicographic ordering over the $n$ primitive objectives $\mathbf{o}=\{o_1,\dots,o_n\}$. We focus on goal-oriented MDPs where one of the objectives $o_i$ is to maximize the reward associated with reaching the goal. The lexicographic ordering over objectives at a \emph{particular state} is determined by the \emph{context} associated with it. The context, inferred from a set of \textit{exogenous} features in the state representation, determines the objective ordering at a state and the reward functions associated with the objectives. 
In the example illustrated in Figure~\ref{fig:context_illustration}, the set of contexts in the environment are: highway, urban area, and construction zone.
Each context imposes a unique ordering over the objectives, which determines an acceptable behavior for that context. Since an agent may encounter multiple contexts during the course of its operation, its reasoning module must be able to seamlessly switch between optimizing different objective orderings.

We introduce \emph{Contextual Lexicographic Markov Decision Process (CLMDP)}, a framework that facilitates optimizing different objective orderings in different regions of the state space. 

\begin{definition} A contextual lexicographic Markov decision process (CLMDP) is denoted by $M = \langle \mathcal{C},\Omega,\mathbf{o},\mathbf{w},f_{\mathbf{w}}, S, A, T, \mathbf{R},f_R, \mathcal{Z}\rangle$, where:
    \begin{itemize}[noitemsep]
        \item $\mathcal{C}=\{c_1,\dots,c_m\}$ denotes a finite set of contexts;
        \item $\Omega$ is a lexicographic ordering over contexts in $\mathcal{C}$;
        \item $\mathbf{o}=\{o_1,\dots,o_n\}$ denotes the set of primitive objectives;
        \item $\mathbf{w}=\{\omega_1,\dots,\omega_m\}$ is a set of unique lexicographic orderings over all primitive objectives, with $\omega_i$ denoting a  lexicographic ordering over all objectives such as $o_1\succ \ldots \succ o_n$, $o_i\in \mathbf{o}$;
        \item $f_{\mathbf{w}}: \mathcal{C}\to\mathbf{w}$ is a function that maps a context $c_j\in\mathcal{C}$ to a lexicographic ordering over the primitive objectives $\omega_i\in\mathbf{w}$;
        \item $S$ is a finite set of states, with initial state $s_0\in S$;
        \item $A$ is a finite set of actions;
        \item $T:S\times A \times S \rightarrow [0,1]$ is the transition function that determines the probability of reaching state $s'$ from state $s$ by executing action $a$, independent of the context; 
        \item $\mathbf{R}\!=\![R_{1},\dots,R_{n}\!]\!\in\!\mathbb{R}^n$ is the space of possible reward vectors, where for each $ o_i\in \mathbf{o}$,  $R_{i}\!:\!S\!\times\!A\!\rightarrow\!\mathbb{R}\,$; 
        \item $f_R:\mathcal{C}\!\to\!\mathbf{R}$ maps context $c_j\in \mathcal{C}$ to a vector of context-specific reward functions, $f_R(c_j)\!=\![R_{1,j},\dots, R_{n,j}]\in \mathbf{R}$; and
        \item $\mathcal{Z}: S \to \mathcal{C}$ is a deterministic mapping of each $s\in S$ to  $c\in\mathcal{C}$
    \end{itemize}
\end{definition}

In a CLMDP, a state can belong to multiple contexts simultaneously, when some state features are shared across contexts. For example in Fig.~\ref{fig:context_illustration}, a road segment with uneven road surface may be associated with both urban  and construction zone contexts.
In such cases, $\mathcal{Z}$ maps the state to the context with higher priority in the meta-ordering $\Omega$. This ensures that the agent's decision-making process remains consistent and aligned with the most critical objective at that state. When $\mathcal{Z}$ maps every state to the same context $c\in \mathcal{C}$, the CLMDP becomes an MDP with a single lexicographic ordering over the objectives as in~\cite{wray2015multi}. In this paper, we consider $\mathcal{Z}$ to be stationary and deterministic but the framework can be extended to support non-stationary $\mathcal{Z}$.

The following section presents an algorithm to solve a CLMDP, given $\mathcal{Z}$. In Section~\ref{sec:learning_Z}, we present a Bayesian approach to infer $\mathcal{Z}$.

\section{Solution Approach}
We begin with an overview of our solution approach, illustrated in Figure~\ref{fig:contextual_approach_overview}.
Given $\mathcal{Z}$, our approach solves CLMDPs by computing a policy $\pi_i$ for each context $c_i \in \mathcal{C}$ independently, and then combines them into a global policy $\pi_G$, based on the context associated with each state. Combining different $\pi_i$ into $\pi_G$ may result in cycles as each policy is computed independent of other policies and contexts. The cycles are detected and resolved by updating the policies associated with lower priority contexts, based on $\Omega$.

\subsection{Computing global policy $\pi_G$} A policy $\pi_i$ for each context $c_i \in \mathcal{C}$ is first computed independent of other contexts. We do this by considering each $c_i \in \mathcal{C}$ to be the \emph{only} context across the entire state space and use the corresponding lexicographic ordering over objectives ($f_{\mathbf{w}}(c_i)$) and the associated reward functions ($f_R(c_i)$) over \emph{all states}. This multi-objective problem, with a single ordering over the objectives, is solved using lexicographic value iteration algorithm~\cite{wray2015multi}. Each $\pi_i$ specifies the optimal actions for its respective context. These individual policies are then compiled into a \emph{global policy} $\pi_G$, where actions are mapped to each state, based on the actual context of each state, following state-to-context mapping $\mathcal{Z}$. 

\subsection{Conflict Detection}\label{sec:detect}
Combining multiple policies, computed under different contexts and associated objective orderings, can lead to conflicting action recommendations that result in cycles and affect goal reachability.  

\begin{definition}\label{def:conflict}
A policy $\pi_G$, in a goal-oriented MDP with goal state $s_g$, is said to have a \textbf{conflict} if there exists at least one state $s$ from which the probability of reaching the goal, $ \emph{Pr}(s_g|s,\pi_G)=0, \exists s \in S.$
\end{definition}

\begin{definition}\label{def:conflict-free}
A \textbf{conflict-free policy} $\pi_G$ has a non-zero probability of reaching the goal from every state $s$, $\emph{Pr}(s_g|s,\pi_G)>0, \forall s\in S$.
\end{definition}

\begin{assumption}\label{assumption:policy-exist}
    There exists a conflict-free policy under $\Omega$.
\end{assumption}
We assume that there exists an underlying conflict-free policy for CLMDP and any cycles detected are due to $\pi_G$ computation based on policies calculated for individual contexts.
To detect conflicts that introduce dead-ends, Algorithm~\ref{alg:policy_conflict_checker} estimates the goal reachability for each state, under $\pi_G$.  
The goal reachability is calculated using a Bellman backup-based method (lines \texttt{11-18}) with a reward function $R_e$ defined as follows: \[R_e(s) = \begin{cases}
    +1,& \text{for }s=s_{goal}\\
     0,&otherwise. 
\end{cases}\] 

States with $V_r(s) = 0$ do not have a path to the goal, indicating a conflict (line \texttt{25}). Thus, if $V_r(s) = 0$ for any $s\in S$, it indicates a conflict.  Conversely, if all states have $V_r > 0$, the policy is conflict-free.
To handle large state spaces, a modified value iteration is used, marking states as solved once they individually converge (lines \texttt{19-22}). Since a converged state's value no longer changes, skipping further updates ensures computational efficiency without compromising the correctness of the solution, allowing for early termination.
For problems with very long trajectories, Bellman backups can be executed in logarithmic space to avoid false positives caused by product over low values~\cite{wray2016log}. Algorithm~\ref{alg:policy_conflict_checker} outputs a Boolean variable indicating the presence of conflicts, along with all the states from which the goal is unreachable.

\begin{algorithm}[t]
\caption{ConflictChecker}
\label{alg:policy_conflict_checker}
\begin{algorithmic}[1]
\State \textbf{Input} Global policy $\pi_G$, state space $S$, transition function $T$, discount factor $\gamma$, goal state $s_g$
\State \textbf{Initialize} conflict $\gets$ \texttt{False}, $\delta\gets 0$, $\epsilon\gets 10^{-6}$
\State \textbf{Initialize} $V_r \gets \{s:0\,|\,s\in S\}$, $\Delta\gets \{s:0\,|\,s\in S\}$
\State \textbf{Initialize} $R_e \gets \{s:0\,|\,s\in S\}$, $R_e(s_g) \gets 1.0$
\State \textbf{Initialize} $S_{solved}\gets [\,]$, $S_{unsolved}\gets S$, $S_{\emph{conflict}}\gets [\,]$
\State $S_{solved}.\textsc{Append}(s_g)$ \Comment{{\color{gray}only $s_g$ is solved initially}}
\State $S_{unsolved}.\textsc{Remove}(s_g)$
\While{$S_{unsolved}\not=\phi$}\Comment{{\color{gray}continue if unsolved states remain}} 
    \State $V_r'\gets V_r$
    \State $S_{check}\gets [\,]$
    \For{$s\in S_{unsolved}$}\Comment{{\color{gray}update $V_r$ for all unsolved states}} 
    \State $V_r(s)\gets R_e(s)$
        \For{$s'\in S\textbf{ }\And\textbf{ }T(s,\pi_G(s),s')>0$}
            \State $V_r(s)\gets V_r(s)+\gamma \cdot T(s,\pi_G(s),s')\cdot V_r(s')$
            \If{$s'\in S_{solved}\textbf{ }\And\textbf{ }s\not\in S_{check}$}
                \State $S_{check}.\textsc{Append}(s)$\Comment{{\color{gray}if $s$ leads to a solved state}}
            \EndIf
        \EndFor
        \State $\delta\gets \max(\delta,|V_r'(s)-V_r(s)|)$
        \State $\Delta(s)\gets |V_r'(s)-V_r(s)|$
    \EndFor
    \For{$s\in S_{check}$}
        \If{$\Delta(s)<\epsilon$}\Comment{{\color{gray}label $s$ as solved if $V_r(s)$ converged}} 
                \State $S_{solved}.\textsc{Append}(s)$
                \State $S_{unsolved}.\textsc{Remove}(s)$
        \EndIf
    \EndFor
    \If{$\delta<\epsilon$ \textbf{or} $S_{unsolved}=\phi$}
         \State break
    \EndIf
\EndWhile
\State $S_{\emph{conflict}} \gets [s\,|\,s\in S \text{ if }V_r(s)=0]$\Comment{{\color{gray}list of conflict states}}
\If{$S_{\emph{conflict}}\not=\phi$}
    \State conflict$\,\gets\texttt{True}$
\EndIf
\State \Return conflict, $S_{\emph{conflict}}$
\end{algorithmic}
\end{algorithm}

\begin{algorithm}[t]
\caption{ConflictResolver}
\label{alg:policy_conflict_resolver}
\begin{algorithmic}[1]
\State\textbf{Input} policy $\pi_G$, set of contexts $\mathcal{C}$, state to context mapping function $\mathcal{Z}$, context to objective ordering map $f_{\mathbf{w}}$, meta-ordering over contexts $\Omega\equiv c_1\succ\dots\succ c_m$, $S$, $A$, $T$, \textbf{R}, $\gamma$, goal $s_g$, list conflict states from ConflictChecker $S_{\emph{conflict}}$
\State \textbf{Initialize} $A_{new}\gets \{\}$, $\Pi\gets \{\}$, $\bf{c^*}\gets c_1$ 
\If{conflict = \texttt{False}} \Return $\pi_G$\EndIf
\For{$s\in S_{\emph{conflict}}$}
\If{$c^*\succ_{\small{\Omega}} \mathcal{Z}(s)$} \State $c^*\gets \mathcal{Z}(s)$\Comment{{\color{gray}stores lowest priority conflict context}}
\EndIf
\EndFor
\For{$c_i$ varying from $c^*$ to $c_1$} \Comment{{\color{gray}low to high priority}} 
\State $C_{update}\gets \{\}, A_{new}\gets A$
\For{$c_j$ varying from $c_i$ to $c_m$} 
    \State $C_{update} \gets C_{update}\cup c_j$ \Comment{{\color{gray}contexts to be updated}} 
\EndFor
\For{$s\in S$ \textbf{and} $\mathcal{Z}(s)\in \mathcal{C}\setminus C_{update}$}
    \State $A_{new}[s]\gets \pi_G[s]$\Comment{{\color{gray}fix actions for $s\not\in C_{update}$}} 
\EndFor
\For{$c_j$ varying highest to lowest priority in  $C_{update}$}
    \State $\Pi[c_j]\gets \emph{LVI}(S,A_{new},T,\textbf{R},f_{\mathbf{w}}(c_j))$\Comment{{\color{gray}new policy for $c_j$}} 
    \For{$s\in S \textbf{ and }\mathcal{Z}(s)=c_j$}
        \State  $A_{new}[s]\gets \Pi[c_j][s]$\Comment{{\color{gray}fix actions for $s\in c_j$}} 
        \State $\pi_G[s]\gets \Pi[c_j][s]$
    \EndFor
\EndFor
\State conflict, \textunderscore $\gets$ ConflictChecker($\pi_G$, $S$, $T$, $\gamma$, $s_g$)
\If{no conflict} 
    \Return{$\pi_G$} 
\EndIf
\EndFor
\State \Return{$\pi_G$} 
\end{algorithmic}
\end{algorithm}

\subsection{Conflict Resolver}
To resolve conflicts identified by Algorithm~\ref{alg:policy_conflict_checker}, conflict resolver in Algorithm~\ref{alg:policy_conflict_resolver} updates policies starting from the lowest-priority context involved in the conflict and moves upwards to the highest priority, while keeping the policies of higher-priority contexts fixed. The algorithm begins by identifying a set of contexts requiring updates, denoted as $C_{\emph{update}}$ (lines \texttt{4-10}), starting with the lowest priority context involved in the conflict, based on $\Omega$. This set expands \emph{iteratively}, adding higher-priority contexts to $C_{\emph{update}}$ if conflicts persist.
At each step, actions for states in higher-priority contexts outside of $C_{\emph{update}}$ are fixed according to $\pi_G$, creating an updated action space $A_{\text{new}}$ (lines \texttt{11-12}). This ensures that these contexts remain unaffected during updates to other contexts. The policies for contexts in $C_{\emph{update}}$ are then updated sequentially from highest to lowest priority, following $\Omega$ and using $A_{\text{new}}$ to condition policy updates on the fixed actions (line \texttt{14}). After updating the policy for a context in $C_{\emph{update}}$, the actions for states associated with that context are fixed, and the action space is updated (lines \texttt{16-17}).

Once all contexts in $C_{\emph{update}}$ are updated, Algorithm~\ref{alg:policy_conflict_checker} is invoked to check for conflicts (line \texttt{18}). If the conflict is resolved, the new conflict-free policy is returned (line \texttt{19}). 
Otherwise, the process is  repeated with an expanded $C_{\emph{update}}$ that includes the next higher-priority context, using the loop variable $c_i$ (line \texttt{7}). 
In the worst case, during the last iteration (when $c_i = c_1$), all contexts are updated starting with highest-priority context, sequentially followed by all lower-priority contexts, ensuring alignment with $\Omega$. Algorithm~\ref{alg:policy_conflict_resolver} is guaranteed to find a conflict-free policy, when   one exists.

\subsection{Theoretical Analysis}
In this section, we analyze the correctness and time complexities of Algorithms~\ref{alg:policy_conflict_checker} and~\ref{alg:policy_conflict_resolver}. 

\begin{proposition}
    Algorithm~\ref{alg:policy_conflict_checker} correctly identifies conflicts.
\end{proposition}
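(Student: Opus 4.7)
The plan is to prove the proposition in two parts: (i) identify the exact fixed point that the backup in lines~11--18 converges to under the fixed deterministic policy $\pi_G$, and (ii) argue that the ``solved''-marking and early-termination machinery do not change which states end up with the classification $V_r(s)=0$.

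For part (i), observe that the reward $R_e$ equals $1$ at $s_g$ and $0$ elsewhere, and under the fixed policy $\pi_G$ the update in lines~12--14 is the standard policy-evaluation Bellman operator $T_{\pi_G}V(s)=R_e(s)+\gamma\sum_{s'}T(s,\pi_G(s),s')V(s')$. Since $R_e$ is nonnegative and $\gamma\in(0,1)$, $T_{\pi_G}$ is a $\gamma$-contraction in the sup norm with a unique nonnegative fixed point admitting the Neumann expansion
\[
V_r^{\star}(s)=\sum_{t=0}^{\infty}\gamma^{t}\,\Pr\bigl(s_t=s_g\mid s_0=s,\pi_G\bigr).
\]
Each term is nonnegative, so $V_r^{\star}(s)>0$ iff there exists some $t$ with $\Pr(s_t=s_g\mid s_0=s,\pi_G)>0$; contrapositively, $V_r^{\star}(s)=0$ iff $\Pr(s_g\mid s,\pi_G)=0$, which is exactly Definition~\ref{def:conflict}.

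For part (ii), I would argue that the solved-marking mechanism is safe. A state $s$ with no positive-probability trajectory to $s_g$ has $V_r(s)=0$ at every iteration, because its Bellman update sums nonnegative contributions from successors whose values are also stuck at $0$; such states are correctly detected by the test $V_r(s)=0$ in line~25. Conversely, any state whose true fixed-point value is strictly positive is bounded below by $\gamma^{k}p^{k}$, where $k$ bounds the length of some positive-probability path to $s_g$ and $p$ is the minimum transition probability along that path; hence it cannot be mistakenly frozen at $0$ by the $\epsilon$-tolerance check in line~22. Moreover, a state is considered for solved status only after at least one of its successors has been declared solved (line~15), so the solved set grows outward from $s_g$ in an order consistent with the direction of the backup, ensuring that by the time a state is frozen its value has already absorbed the propagated contributions from converged successors.

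Putting (i) and (ii) together, the set $S_{\mathit{conflict}}=\{s\in S:V_r(s)=0\}$ returned in line~25 coincides with $\{s\in S:\Pr(s_g\mid s,\pi_G)=0\}$, and the Boolean flag is raised iff this set is nonempty, matching Definition~\ref{def:conflict} exactly. The hard part will be part~(ii): one must show that the early-termination heuristic never prematurely freezes a state whose value would still have grown, and I expect to handle this by combining the propagation-front argument with the strict-positivity lower bound on truly reachable states, so that the $\epsilon$-threshold cleanly separates the true-zero and true-positive regimes.
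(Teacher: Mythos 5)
Your proof is correct and its core is the same as the paper's: under the reward $R_e$, the iteration in lines 11--18 is policy evaluation for the fixed policy $\pi_G$, and states with zero goal-reaching probability are stuck at $V_r=0$ (your ``stuck at zero'' observation is exactly the paper's induction that $V_r^{(k)}(s_u)=0$ for all $k$). Where you go beyond the paper is worth noting. The paper only argues soundness in one direction --- it cites convergence of Bellman backups and shows unreachable states remain at zero, leaving implicit both the converse (that every state with positive reachability ends up with $V_r>0$, which your Neumann-series expansion of the fixed point makes explicit) and the correctness of the solved-marking/early-termination machinery, which the paper's proof does not discuss at all. Your part (ii) is the genuinely new content, and it is essentially right, with one refinement needed: the lower bound $\gamma^k p^k$ on the limiting value of a reachable state does not by itself rule out premature freezing, because the test in line 22 is on the per-iteration change $\Delta(s)$, not on the magnitude of $V_r(s)$ --- a reachable state far from the goal can sit at value $0$ with $\Delta(s)=0<\epsilon$ for many sweeps. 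What actually carries the argument is your propagation-front observation: a state enters $S_{check}$ only once it has a successor in $S_{solved}$ (line 15), all solved states are goal-reaching with strictly positive value, and the update performed in that same sweep already adds $\gamma\,T(s,\pi_G(s),s')\,V_r(s')>0$, so any state that gets frozen is frozen at a strictly positive value and cannot land in $S_{\mathit{conflict}}$. With that emphasis, your two parts together give a complete proof of both directions, which is stronger than the paper's one-directional argument.
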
  
\begin{proof} We first describe $V_r$ convergence and then show that 
states from which the goal cannot be reached have $V_r(s) = 0$ under reward $R_e$ defined in Sec.~\ref{sec:detect}. 
Since Algorithm~\ref{alg:policy_conflict_checker} uses Bellman backup (lines \texttt{12-14}), $V_r$ is guaranteed to converge~\cite{sutton1998introduction}.

Let $S_u$ denote the set of all states from which $s_{goal}$ is unreachable, and let $S_r$ represent the set of all states that can reach the goal, such that $S = S_u \cup S_r$. In the worst case, the unsolved set may be the entire state space, $S_{unsolved}=S$. $V_r$ is first initialized to zero for all states (line \texttt{3}): $V_r^{(k=0)}(s) = 0,  \forall s \in S$. We now show using induction that $V_r(s_u)=0, \forall s_u \in S_u$ always. For $k=0, V_r^{(0)}(s_u)=0$ (by initialization). Using Bellman equation, the value of a state $s$ in the $k^{th}$ iteration is updated as:
\begin{equation}
    V_{r}^{(k)}(s) = R_e(s) + \gamma \max_a \sum_{s' \in S} T(s,a,s') V_r^{(k-1)}(s')
    \label{eqn:re_bellman}
\end{equation}
where $T(s,a,s')$ is the transition probability from $s$ to $s'$ using action $a$, and $\gamma$ is the discount factor. Note that, while all actions in a state $s$ are considered here for generality, Algorithm~\ref{alg:policy_conflict_checker} only requires $\pi_G(s)$ for conflict detection. By Definition~\ref{def:conflict}, 
$T(s_u, a, s_r)\!=\!0$ and so the values of any $s_u\in S_u$ never get updated. Hence for $k^{th}$ iteration: $V_r^{(k)}(s_u)=0$. For iteration $k+1$, using Equation~\ref{eqn:re_bellman},
\[ V_{r}^{(k+1)}(s_u) = R_e(s_u) + \gamma \max_a \sum_{s' \in S} T(s_u,a,s') V_r^{(k)}(s'), \forall s_u \in S_u.\]

For $V_{r}^{(k+1)}(s_u)$ to be non-zero, at least one of its successors must have a path to goal, $s' \in S_r$. 
Substituting $R_e(s_u)=0$ and $T(s_u,a,s_r)\!=\!0$ in the above equation, we get $V_{r}^{(k+1)}(s_u)\!=\!0$.
Thus, Algorithm~\ref{alg:policy_conflict_checker} identifies a conflict, when there exists one.
\end{proof}

\begin{proposition}
The time complexity of Algorithm~\ref{alg:policy_conflict_checker} is $O(|S|^2)$. \label{prop:alg1_time}
\end{proposition}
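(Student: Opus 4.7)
The plan is to decompose the analysis of Algorithm~\ref{alg:policy_conflict_checker} into two parts: the cost of a single iteration of the outer while loop, and an amortized argument for the total cost across iterations, then combine them to obtain the $O(|S|^2)$ bound.

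First I would identify that the dominant work within a single while iteration lies in the two nested for loops at lines 11 and 13. The outer loop ranges over $S_{unsolved}$, which has size at most $|S|$. The inner loop ranges over successors $s'$ satisfying $T(s, \pi_G(s), s') > 0$; in the worst case this is also $O(|S|)$, though in structured MDPs it is typically much smaller. The remaining auxiliary work in each iteration --- copying $V_r$ into $V_r'$, building $S_{check}$, and the subsequent convergence-check pass at lines 20--23 --- contributes at most $O(|S|)$. Thus a single pass through the while loop performs $O(|S|^2)$ basic operations.

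Next I would argue why the aggregate work across all iterations remains $O(|S|^2)$ instead of blowing up to $O(|S|^3)$. Because $\pi_G$ is fixed and $R_e$ is non-zero only at $s_g$, Algorithm~\ref{alg:policy_conflict_checker} is effectively computing backward reachability to $s_g$ on the deterministic policy-induced subgraph (as formalized by the preceding Proposition). The \emph{solved} tagging mechanism propagates this reachability outward from $s_g$ in a BFS-like manner: once all relevant successors of a state $s$ lie in $S_{solved}$, one additional Bellman backup stabilizes $V_r(s)$ within $\epsilon$, after which $s$ is moved from $S_{unsolved}$ to $S_{solved}$ and no longer participates in further iterations.

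The main obstacle will be formalizing the amortization that tightens the naive product of iteration count and per-iteration cost. I would handle this by charging all work to traversals of edges in the policy-induced subgraph, which contains at most $|S|^2$ directed edges, plus $O(|S|)$ bookkeeping per state for membership maintenance in $S_{unsolved}$, $S_{solved}$, and $S_{check}$. Combining these charges --- or equivalently, invoking the equivalence to a standard graph-reachability test solvable in $O(|V| + |E|) = O(|S| + |S|^2) = O(|S|^2)$ via BFS/DFS --- yields the claimed bound.
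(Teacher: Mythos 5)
Your per-sweep accounting matches the paper's: the paper's proof simply observes that computing $V_r$ (lines 8--24) is policy evaluation of $\pi_G$ under $R_e$, charges $O(|S|^2)$ for it (for each state, a sum over its successors), adds $O(|S|)$ for the final scan in lines 25--27, and concludes $O(|S|^2)$ --- implicitly treating the number of Bellman sweeps as a constant depending only on $\gamma$ and $\epsilon$, not on $|S|$. Up to the point where you bound a single pass of the while loop by $O(|S|^2)$, you are doing the same thing.

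The gap is in your amortization step. The charging scheme "charge all work to traversals of edges in the policy-induced subgraph" is not valid for this algorithm: an edge $(s,s')$ with $T(s,\pi_G(s),s')>0$ is traversed in \emph{every} while-iteration during which $s$ remains in $S_{unsolved}$, not once. The solved-tagging mechanism is not a BFS frontier --- the algorithm re-sweeps all unsolved states each round, a state is only eligible for removal after it touches a solved successor \emph{and} its value has stabilized (which typically costs an extra sweep per "layer" of distance from $s_g$), and states in $S_{\emph{conflict}}$ are never removed at all, so they are re-backed-up in every iteration until the $\delta<\epsilon$ termination fires. Consequently the total work is roughly (number of sweeps) $\times$ (per-sweep cost); with a long shortest-path distance to the goal and dense transition support this is $\Theta(|S|)\times\Theta(|S|^2)$ under your accounting, not $O(|S|^2)$. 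The "equivalent to graph reachability solvable by BFS/DFS in $O(|S|+|E|)$" remark describes what the algorithm computes, not what it does, so it cannot substitute for the missing bound. To close the gap you should either follow the paper and state explicitly that iterative policy evaluation to tolerance $\epsilon$ requires a number of sweeps bounded by a constant in $|S|$ (a function of $\gamma$ and $\epsilon$ only), or restructure the argument (e.g., actually analyze a BFS-style backward-reachability procedure), which would be proving a different algorithm than the one in the pseudocode.
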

\begin{proof}
The complexity of Algorithm~\ref{alg:policy_conflict_checker} depends on the time taken for calculating $V_r$ and the time taken to check for conflicts based on the computed $V_r$.  Computing $V_r$ (lines \texttt{8-24}) is essentially policy evaluation with $\pi_G$ under reward $R_e$. Hence the complexity of this step is $O\left(|S|^2\right)$. The worst case complexity of conflict detection step (lines\texttt{ 25-27}) is $O\left(|S|\right)$  since it cycles through the entire state space. Thus, the worst case time complexity of Algorithm~\ref{alg:policy_conflict_checker} is $O\left(|S|^2+|S|\right)\implies O\left(|S|^2\right)$.
\end{proof}

\begin{proposition}
    Algorithm~\ref{alg:policy_conflict_resolver} guarantees a conflict-free policy.
\end{proposition}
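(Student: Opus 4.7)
The plan is to combine a termination argument with a worst-case reduction to Assumption~\ref{assumption:policy-exist}. The outer loop of Algorithm~\ref{alg:policy_conflict_resolver} iterates $c_i$ from the lowest-priority conflicted context $c^{*}$ up to $c_1$ along $\Omega$, so it performs at most $|\mathcal{C}|$ iterations. At each pass, \textsc{ConflictChecker} is invoked at line~\texttt{18} and, by Proposition~1, correctly detects whether the current $\pi_G$ is conflict-free; if so, the algorithm returns a certified conflict-free policy.

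The substantive case is the final iteration $c_i = c_1$, where $C_{\emph{update}} = \mathcal{C}$ and no actions are pre-fixed by lines~\texttt{11-12}. The inner loop then performs LVI for every context in exact $\Omega$-order, with each lower-priority LVI call restricted through $A_{\text{new}}$ to actions compatible with the commitments already made for higher-priority contexts. This is precisely the sequential construction that realizes a policy respecting the meta-ordering $\Omega$: at every state $s$, the chosen action is lexicographically optimal for $f_{\mathbf{w}}(\mathcal{Z}(s))$ subject to the commitments made for strictly higher-priority contexts.

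The key step, and the main obstacle, is to show that this $\Omega$-respecting construction cannot fail when Assumption~\ref{assumption:policy-exist} holds. I would argue by induction on $j = 1, \dots, m$ that the partial policy built after processing $c_1, \dots, c_j$ preserves goal reachability from every state. Let $\pi^{\star}$ denote the $\Omega$-consistent conflict-free witness guaranteed by the assumption. In the base case, LVI for $c_1$ sees the full action space, and since reaching $s_g$ appears among the primitive objectives, its output dominates $\pi^{\star}$ on that objective and thus preserves reachability. Inductively, after the actions of $c_1, \dots, c_{j-1}$ have been committed, $\pi^{\star}$ restricted to $\mathcal{Z}^{-1}(c_j)$ is still feasible under $A_{\text{new}}$ (because $\pi^{\star}$ is itself $\Omega$-consistent with the commitments that LVI could have made), so LVI for $c_j$ returns a policy that is lexicographically no worse than $\pi^{\star}$ on $f_{\mathbf{w}}(c_j)$ and therefore also satisfies $\Pr(s_g \mid s, \pi) > 0$ for every $s$ reached from $c_j$-states. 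Concatenating over all contexts yields a $\pi_G$ satisfying Definition~\ref{def:conflict-free}, which \textsc{ConflictChecker} then certifies at line~\texttt{18}, guaranteeing that the algorithm returns a conflict-free policy.
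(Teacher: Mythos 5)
Your skeleton matches the paper's argument: bound the outer loop by $|\mathcal{C}|$, note that line~\texttt{18} certifies any returned policy via the correctness of ConflictChecker, and reduce the guarantee to the worst-case final iteration $c_i=c_1$, in which every context is re-solved sequentially in $\Omega$-order with higher-priority actions fixed through $A_{\text{new}}$. The paper argues this last step directly from the lexicographic structure (fixing higher-priority actions means lower-priority updates cannot introduce conflicts into already-processed contexts) together with Assumption~\ref{assumption:policy-exist} and the final verification call. Your attempt to tighten that step with a witness $\pi^{\star}$ and an induction is where genuine gaps appear.

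Two concrete problems. First, the base case claim that ``since reaching $s_g$ appears among the primitive objectives, its output dominates $\pi^{\star}$ on that objective and thus preserves reachability'' is false in general: the goal-reaching objective need not be the top objective of $f_{\mathbf{w}}(c_1)$ (the paper's domains use orderings such as $o_2\succ o_1\succ o_3$), and a lexicographically optimal policy only dominates on higher-priority objectives — on a lower-ranked goal objective it can be strictly worse than an arbitrary conflict-free $\pi^{\star}$ that sacrifices the top objective. For the same reason, ``lexicographically no worse than $\pi^{\star}$ on $f_{\mathbf{w}}(c_j)$'' in the inductive step does not yield $\Pr(s_g\mid s,\pi)>0$ unless the comparison is exactly tied on every objective ranked above the goal objective, which you do not establish. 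Second, the inductive step conflates the commitments LVI \emph{could} have made with those it \emph{actually} made: Assumption~\ref{assumption:policy-exist} gives only existence of some conflict-free policy under $\Omega$, and that witness need not agree with the specific actions the algorithm fixes at states of $c_1,\dots,c_{j-1}$ (LVI is run over the whole state space and may break ties differently). Hence ``$\pi^{\star}$ is still feasible under $A_{\text{new}}$'' does not follow, and the conflict-freeness of $\pi^{\star}$ transfers nothing about reachability under the actually committed actions. As written your induction does not close; what it would need is the stronger premise — which the paper itself implicitly leans on — that any sequential, $\Omega$-respecting lexicographic solution with higher-priority actions held fixed preserves goal reachability, i.e.\ essentially Assumption~\ref{assumption:policy-exist} restated at the level of the constructed policy rather than as bare existence.
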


\begin{proof}
We prove this using the property of lexicographic ordering, where the optimality of a policy in a higher-priority context does not depend on those of lower priority. Consider a meta-ordering of contexts $\Omega \equiv c_1 \succ \dots \succ c_m$. Algorithm~\ref{alg:policy_conflict_resolver} identifies the lowest-priority context involved in the conflict, $c^*$ (lines \texttt{4-6}). The algorithm then marks all contexts from the identified $c^*$ (in line \texttt{6}) to least priority context $c_m$ for updates (lines \texttt{7-10}). Higher-priority contexts are kept fixed (lines \texttt{11-12}) while the policies for the identified contexts are updated using lexicographic value iteration (line \texttt{14}). After the context's policy is updated, the actions are fixed (lines \texttt{15-17}) before moving on to lower priority contexts following $\Omega$. 

We now show that the final iteration, where all context policies are updated in sequence according to $\Omega$, yields a conflict-free policy. Specifically, considering policy update in line \texttt{14}:
\[
\pi_{c_i}(s) = \arg\max_{a \in A^*_{c_i}} \left[ R(s) + \gamma \sum_{s' \in S} T(s, a, s') V_{c_i}(s') \right], \quad \forall 2 \leq i \leq m 
\]
Here, $A^*_{c_i}$ represents the action space, with actions fixed for states in the higher-priority contexts $c_1$ through $c_{i-1}$. Since these updates do not affect the higher-priority contexts, no new conflicts can be introduced in $c_1, \dots, c_{i-1}$. Thus, the sequential updates across all contexts, respecting the lexicographic ordering $\Omega$, ensure that no conflicts arise in higher-priority contexts. The resulting policy is also verified for conflicts using Algorithm~\ref{alg:policy_conflict_checker} (line \texttt{18}), ensuring that Algorithm~\ref{alg:policy_conflict_resolver} terminates with a conflict-free policy.
\end{proof}

\begin{proposition}
    The time complexity of Alg.~\ref{alg:policy_conflict_resolver} is $O(|\mathcal{C}|^2 |S|^2 |A|)$.
\end{proposition}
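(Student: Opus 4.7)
The plan is to bound the complexity by inspecting the nested loop structure of Algorithm~\ref{alg:policy_conflict_resolver} and identifying the dominant operation, which will turn out to be the invocation of lexicographic value iteration (LVI) inside the innermost loop.

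First I would count the outer iterations. The loop on line \texttt{7} iterates over contexts from $c^*$ down to $c_1$ in the meta-ordering $\Omega$, so it executes at most $|\mathcal{C}|$ times. Inside each such iteration, the construction of $C_{\emph{update}}$ (lines \texttt{8-10}) is $O(|\mathcal{C}|)$, and the initialization that fixes actions for states outside $C_{\emph{update}}$ (lines \texttt{11-12}) costs $O(|S|)$. The inner loop over contexts in $C_{\emph{update}}$ (lines \texttt{13-17}) again runs at most $|\mathcal{C}|$ times, and each iteration performs two tasks: computing a policy via LVI on the restricted action space $A_{new}$ (line \texttt{14}), and fixing the new actions for the states in that context (lines \texttt{15-17}), which is $O(|S|)$.

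Next I would insert the per-call cost of LVI. Standard lexicographic value iteration is essentially a chain of $n$ scalar value iterations, each of which is $O(|S|^2 |A|)$ per sweep; treating the number of objectives as a constant and assuming a constant number of sweeps to convergence (as is customary when stating complexities in terms of $|S|$ and $|A|$), each LVI invocation is $O(|S|^2 |A|)$. Finally, the conflict check on line \texttt{18} costs $O(|S|^2)$ by Proposition~\ref{prop:alg1_time}. Combining these, a single iteration of the outer loop is bounded by
\[
O(|\mathcal{C}|) + O(|S|) + |\mathcal{C}| \cdot \bigl(O(|S|^2|A|) + O(|S|)\bigr) + O(|S|^2) \;=\; O(|\mathcal{C}| \, |S|^2 |A|),
\]
since the LVI cost dominates every other term. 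Multiplying by the at most $|\mathcal{C}|$ outer iterations yields the claimed $O(|\mathcal{C}|^2 |S|^2 |A|)$ bound.

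The main obstacle I anticipate is justifying the per-call cost of LVI cleanly: one has to argue that restricting to $A_{new}$ (which may collapse to a single fixed action at some states) does not increase the per-sweep cost beyond $O(|S|^2 |A|)$, and that the number of objectives and value-iteration sweeps can be absorbed into the constant. Everything else is bookkeeping over the two nested loops over $\mathcal{C}$, with the conflict-checker contribution safely dominated.
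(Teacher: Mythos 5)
Your proposal is correct and follows essentially the same route as the paper: counting the $O(|\mathcal{C}|)$ outer iterations, the $O(|\mathcal{C}|)$ inner loop over $C_{\emph{update}}$ with an $O(|S|^2|A|)$ LVI call per context dominating the $O(|S|)$ bookkeeping and the $O(|S|^2)$ conflict check, giving $O(|\mathcal{C}|^2|S|^2|A|)$. Your added remark on why each LVI call is $O(|S|^2|A|)$ (absorbing objectives and sweeps into constants) is a reasonable elaboration of a cost the paper simply asserts, not a different argument.
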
 
\begin{proof}
The time complexity of Algorithm~\ref{alg:policy_conflict_checker} can be broken down into three primary components: identifying the minimum priority context involved in a conflict (lines \texttt{4-6}), updating the policy based on the lexicographic ordering over conflicts (lines \texttt{8-17}), and verifying that the updated policy is conflict-free (line \texttt{19}). The time complexity of identifying the minimum priority context (lines \texttt{4-6}) is $O(|S|)$ time in the worst case. The time taken by the main outer loop (lines \texttt{7-19}) is  $O(|\mathcal{C}|)$ since it iterates over all contexts in $\mathcal{C}$. Within this loop, compiling the list of contexts to be updated (lines \texttt{9-10}) requires $O(|\mathcal{C}|)$ time as it cycles through all contexts. Fixing actions for the remaining contexts (lines \texttt{11-12}) takes $O(|S|)$ time in the worst case if actions need to be fixed for all states. Updating policies for contexts in $C_{\emph{update}}$ (lines \texttt{13-17}) involves iterating through all contexts in $C_{\emph{update}}$, which takes $O(|\mathcal{C}|)$ time. For each context, the policy computation using lexicographic value iteration takes $O(|S|^2 |A|)$, and the subsequent policy update and fixing loop requires $O(|S|)$ time. Therefore, this step takes $O\left(|\mathcal{C}|\left(|S|^2 |A| + |S|\right)\right)$. Finally, conflict checking (line \texttt{18}) has a time complexity of $O(|S|^2)$ (from Proposition~\ref{prop:alg1_time}). Thus, the overall time complexity of Algorithm~\ref{alg:policy_conflict_resolver} is $O\left(|\mathcal{C}|^2 |S|^2 |A|\right)$.
\end{proof}

\section{Learning State-Context Mapping}
\label{sec:learning_Z}

The mapping between the states and contexts, $\mathcal{Z}$ is critical for effective planning but this information may sometimes be unavailable to the agent a priori. We address this challenge by using a Bayesian approach to infer the likely context of a state, using a limited number of expert demonstrations. Note that we assume access to all other parameters of CLMDP, except $\mathcal{Z}$. 

Consider a set of expert trajectories $\boldsymbol{\tau}\!=\!\{\tau_1, \ldots, \tau_N\}$, where each $\tau_i$ is a trajectory that originates at a random start state and terminates at the goal state $s_g$,  $\tau_i(s)\!=\!\{(s, a_0),\ldots,(s_g, a_n)\}$. Given $\boldsymbol{\tau}$, the posterior probability of a context $c$ is computed as,
\begin{equation}
Pr(c|s,\vec{r},\boldsymbol{\tau}) \propto \sum \limits_{a \in A} Pr(\vec{r}|s,a,c)\cdot Pr(a|s,c,\boldsymbol{\tau})\cdot Pr(c|\boldsymbol{\tau})
\label{eqn:posterior}
\end{equation}
where $Pr(\vec{r}|s,a,c)$ is the probability of observing the reward vector $\vec{r}$ when executing action $a$ in state $s$ under context $c$, $Pr(a|s,c,\boldsymbol{\tau})$ is the probability of the expert executing action $a$ in state $s$ if $c$ is the underlying context associated with $s$, and $Pr(c|\boldsymbol{\tau})$ denotes the prior probability, given $\boldsymbol{\tau}$. 
We marginalize over actions to ensure that the computed posterior reflects the overall probability of context $c$ across all potential actions that the expert could have taken, given the state and reward (which varies based on the context).
We first describe how these terms are calculated for state-action pairs in the dataset and then discuss how to estimate the posterior for other states. 
Algorithm~\ref{alg:infer_Z} outlines the steps involved in inferring $\mathcal{Z}$.

\paragraph{Likelihood estimation:} The probability of observing a particular reward vector $\vec{r}$ in the data is either one or zero, based on $f_R$ in the CLMDP (line \texttt{10}). That is, $Pr(\vec{r}|s,a,c)\!=\!1$ when $\vec{r}$ matches the reward vector under context $c$ ($f_R(c)$), and $Pr(\vec{r}|s,a,c)\!=\!0$ otherwise. 

\begin{algorithm}[t]
\caption{Infer State-Context Mapping, $\mathcal{Z}$}
\label{alg:infer_Z}
\begin{algorithmic}[1]
    \State \textbf{Input} $\mathcal{C}$, $\mathbf{o}$, $f_{\mathbf{w}}$, $\Omega$, $S$, $A$, $T$, $\mathbf{R}$, $s_g$, Set of expert trajectories $\boldsymbol{\tau}$
    \State $\mathcal{P}(s,a)\gets [], \forall (s,a) \in \boldsymbol{\tau}$
    \For{$c \in \mathcal{C}$}
        \State $\pi_c\gets LVI(S,A,T,\mathbf{R},f_{\mathbf{w}}(c))$ \Comment{{\color{gray} compute policy for $c$}}
        \State $\mathcal{P}(s,\pi_c(s)).\textsc{Append}(c), \forall (s,\pi_c(s))\in\boldsymbol{\tau}$\Comment{{\color{gray} possible contexts}}
    \EndFor
    \For{$s \in S$}
        \State $\mathcal{Z}(s)\gets\{\}$
        \For{$c \in \mathcal{C}$}
            \For{$a \in A$}
                \State Calculate $Pr(\vec{r}|s,a,c)$ based on $f_R(c)$
                \If{$s \in \boldsymbol{\tau}$} \Comment{{\color{gray} state is in expert data}}
                    \State $a_E\gets$ expert action in $s$
                    \State Calculate $Pr(a=a_E|s, c \in \mathcal{P}(s,a))$
                    \State $Pr(c)\gets Pr(c|\mathcal{P}(s,a))$
                \EndIf
                \If{$s \notin \boldsymbol{\tau}$} \Comment{{\color{gray} state is not in expert data}}
                    \State Calculate $Pr(a=\pi_c(s)|s,c)$ 
                    \State $Pr(c)\gets$uniform distribution over $\mathcal{C}$
                \EndIf
            \EndFor
            \State Compute posterior $Pr(c|s,\vec{r})$ using Eqn.~\ref{eqn:posterior}
        \EndFor
        \State $\hat{c}\!\gets\!(c\!\in\!\mathcal{C}: Pr(c|s,r)\!=\!\max\limits_{c'\in\mathcal{C}}(Pr(c'|s,r)))$ 
        \State $\mathcal{Z}(s)\!\gets\!\text{argmax}_{\hat{c}}\Omega$ 
    \EndFor
    \State \Return{Z}
\end{algorithmic}
\end{algorithm}

\paragraph{Action probability:} For a state $s$ in the expert data, the probability of taking an action $a$ under context $c$ is either one or zero depending on whether the expert followed that action during demonstration. To estimate this probability, we calculate a set of \emph{possible contexts}, $\mathcal{P}(s_k,a_k)$, that the expert might have followed for each state-action pair in the data, as follows. 
First, we compute a policy $\pi_c$ for each $c \in \mathcal{C}$, assuming $c$ is the only context and \emph{all} states are mapped to it (line \texttt{4}). 
The objective ordering and the reward functions associated with $c$ are determined using $f_{\mathbf{w}}(c)$ and $f_R(c)$ respectively. Note that this problem has a single context (and hence a single ordering over primitive objectives) and is solved using lexicographic value iteration (LVI)~\cite{wray2015multi}. Second, the expert's action for a state $s$ in the data is compared with $\pi_c(s)$. A context $c$ is considered as a potential context if $\pi_c(s)$ matches expert action at state $s$ (line \texttt{5}). The action probability $Pr(a|s,c,\boldsymbol{\tau})\!=\!1$, if the expert followed action $a$ and  $c \in \mathcal{P}(s,a)$. Otherwise, $Pr(a|s,c,\boldsymbol{\tau})\!=\!0$ (line \texttt{13}). 

\paragraph{Calculating informed prior:}
The prior probability $Pr(c|\boldsymbol{\tau})$ is often a uniform distribution. However, when expert data is available, an \emph{informed prior} can be calculated (line \texttt{14}). The prior $Pr(c|\boldsymbol{\tau})$ for a state-action pair $(s,a)$ in the dataset is calculated as the fraction of occurrences of $c$ in the possible contexts set $\mathcal{P}(s,a)$.

\paragraph{Posterior estimation for states not in dataset:}
For states not in $\boldsymbol{\tau}$, the posterior is estimated using $ Pr(c|s,\vec{r}) \propto \sum \limits_{a \in A} Pr(\vec{r}|s,a,c)\cdot Pr(a|s,c)\cdot Pr(c)$, where each term is calculated based on the policy computed for each context, instead of using expert data.
The action probability is determined by the policy $\pi_c$ that is computed by considering all states mapped to context $c$. $Pr(a|s,c)\!=\!1$ if $\pi_c(s)=a$, and $Pr(a|s,c)\!=\!0$ otherwise (line \texttt{16}). 
The prior $Pr(c)$ is a uniform distribution over all contexts (line \texttt{17}). 
$Pr(\vec{r}|s,a,c)$ is determined based on $f_R$, similar to estimating it under expert data (line \texttt{10}).

Finally, the context with the highest posterior probability is mapped to the state $s$. If multiple contexts are equally probable, then the context with a higher priority in the meta-ordering $\Omega$ is assigned to the state (lines \texttt{19-20}), ensuring that the most critical objective in that state is prioritized. 
A natural question then is why we need expert data if $\mathcal{Z}$ can be inferred without it. 
When $\mathcal{Z}$ is inferred without using expert data,
it does not provide any information as to which context should be followed in the states.  Expert data offers insights into potential contexts for a subset of states, enabling the agent to compute a conflict-reduced policy.

\begin{figure*}[t]
    \centering
    \includegraphics[width=\linewidth,trim={0 1cm 0 1cm},clip]{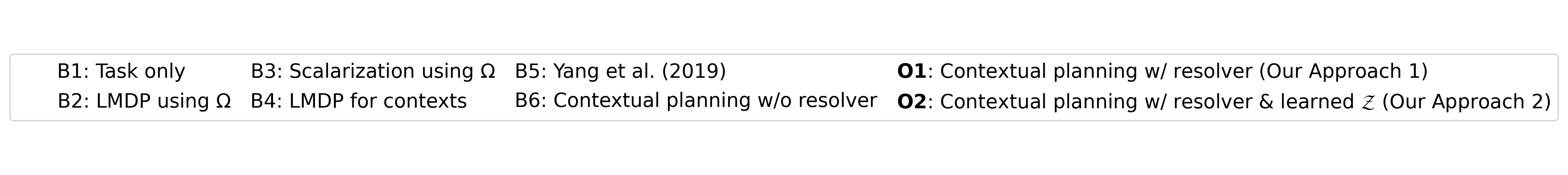}
    \begin{subfigure}[t]{0.33\textwidth}
        \includegraphics[width=\linewidth,trim={0 0 1.88cm 0},clip]{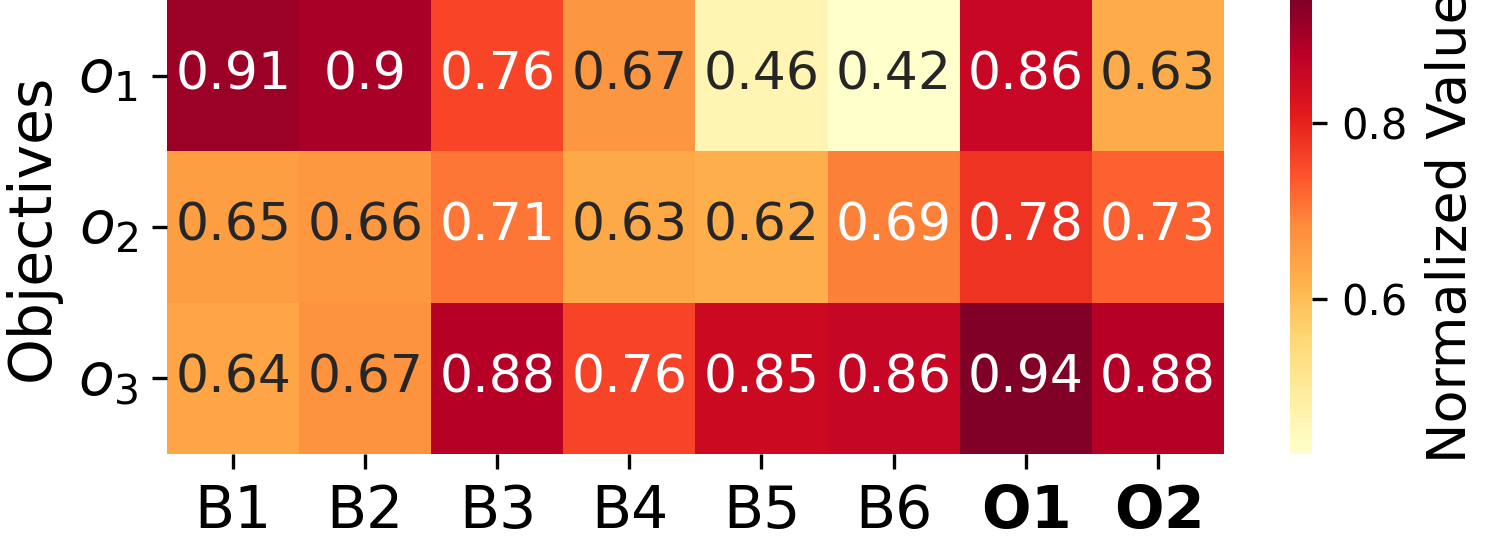}
        \caption{Salp domain}
        \label{fig:salp_heatmap}
    \end{subfigure}
    \begin{subfigure}[t]{0.31\textwidth}
        \includegraphics[width=\linewidth,trim={0.6cm 0 1.88cm 0},clip]{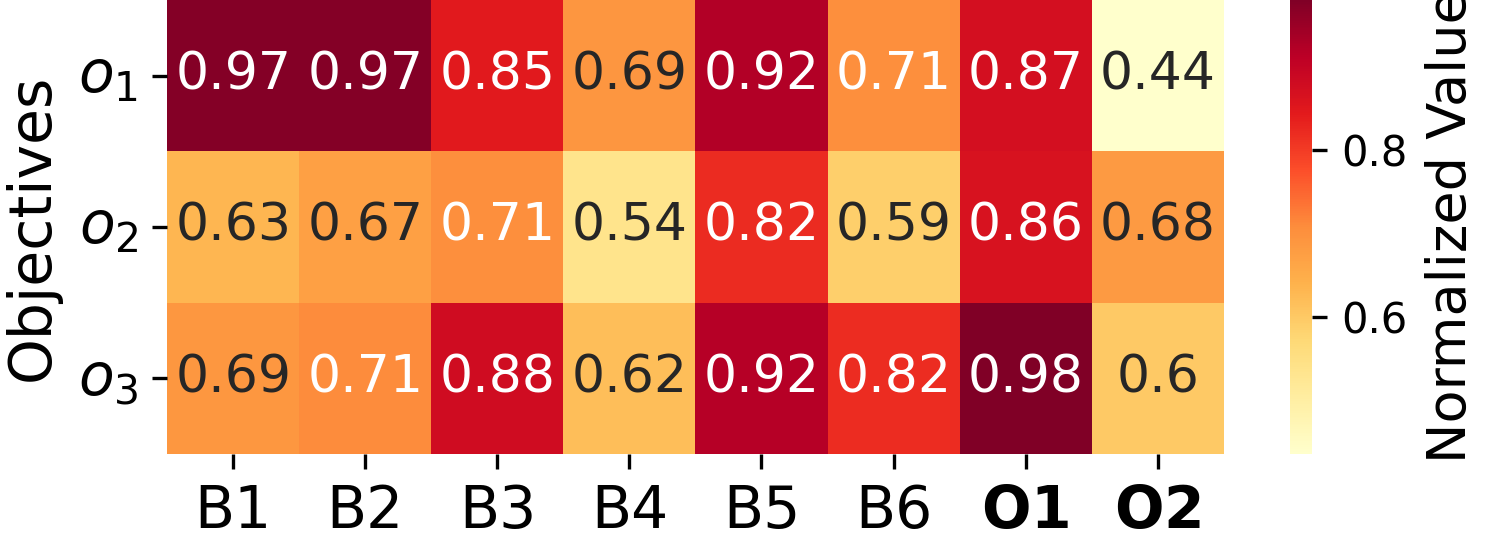}
        \caption{Semi-autonomous taxi domain}
        \label{fig:taxi_heatmap}
    \end{subfigure}
    \begin{subfigure}[t]{0.352\textwidth}
        \includegraphics[width=\linewidth,trim={0.6cm 0 0.2cm 0},clip]{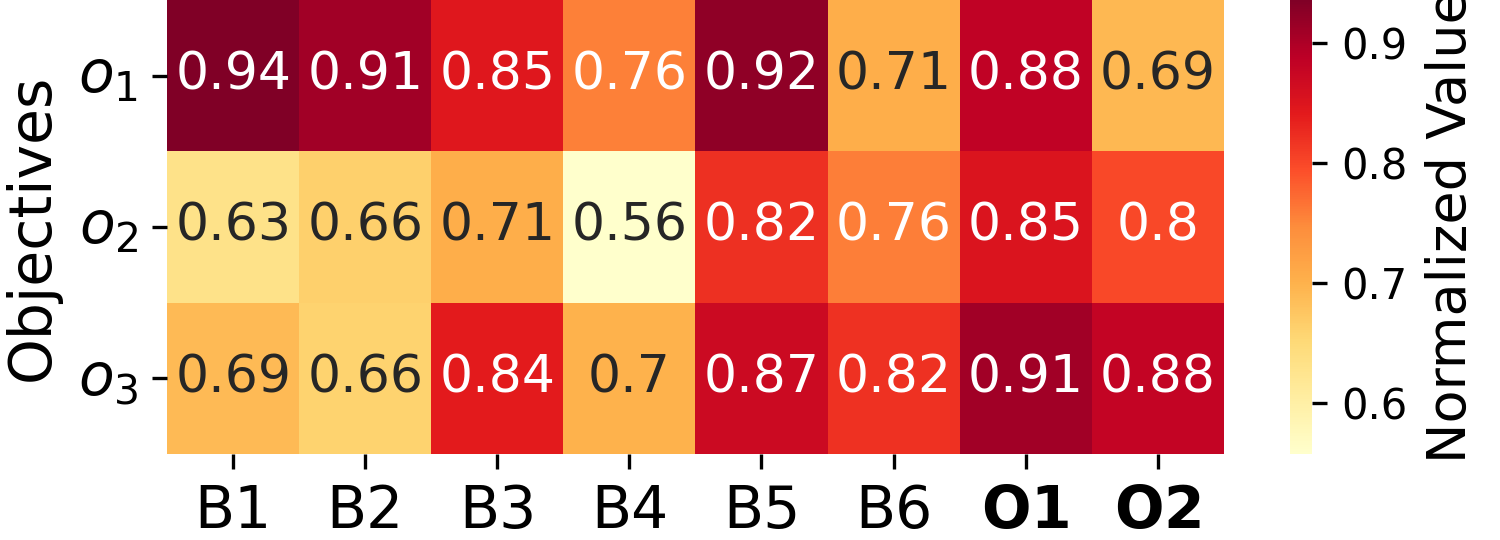}
        \caption{Warehouse domain}
        \label{fig:warehouse_heatmap}
    \end{subfigure}
    \caption{Performance in all objectives ($o_1,\,o_2,\,o_3$) normalized against maximum reward, averaged over 100 trials across 5 instances of each domain, represented as a heatmap with darker shade denoting higher and lighter denoting lower reward. }
    \label{fig:heatmap_all_domains}
\end{figure*}

\section{Experiments}
We evaluate our approach against six baselines using three domains in simulation and validate our results through hardware experiments. Unless specified otherwise, all algorithms were implemented by us in Python and the simulations were run on an iOS machine with $18$ GB RAM. 
\footnote{Code: \url{https://tinyurl.com/Contextual-LMDP}.}

\paragraph{Baselines} 
We evaluate the performance of two variants of our approach, \emph{contextual planning with conflict resolver}: one with a provided state-to-context mapping $\mathcal{Z}$ (\textbf{O1}), and the other where $\mathcal{Z}$ is learned by Bayesian approach (\textbf{O2}) using Algorithm~\ref{alg:infer_Z}. The performances are compared with six baselines:
\begin{itemize}[leftmargin=*]
    \item \emph{B1: Task only}--a single objective planning that focuses solely on reaching the goal state and ignores the context;
    \item \emph{B2: LMDP using $\Omega$}--applies the lexicographic MDP formulation~\cite{wray2015multi} assuming the entire state space falls under the highest priority context in $\Omega$;
    \item \emph{B3: Scalarization using $\Omega$}--policy computation using scalarization, with all states mapped to the highest priority context in $\Omega$;
    \item  \emph{B4: LMDP for contexts}--modifies LMDP~\cite{wray2015multi} to plan for multiple contexts in descending order of priority following $\Omega$;
    \item \emph{B5: Adaptive scalarization from Yang et al. (2019)}~\cite{yang2019generalized}--uses a deep neural network (DNN) to learn a policy based on state and scalarization weights as input, producing actions as output;
    \item\emph{B6: Contextual planning w/o resolver}--our approach with $\mathcal{Z}$ given but without using the conflict resolver;
\end{itemize}
\emph{B2} and \emph{B3} are multi-objective planning but with a single ordering over objectives. The following domains are used for evaluation.

\paragraph{Sample Collection using Salp} A salp-inspired underwater robot~\cite{sutherland2010comparative} optimizes the collection and deposition of chemical sample ($o_1$) while minimizing coral damage ($o_2$) and optimizing battery usage by avoiding eddy currents ($o_3$). The robot operates under three contexts: \emph{\{$c_1$: task completion, $c_2$: coral, $c_3$: eddy\}}. Context $c_1$ prioritizes depositing the sample over minimizing coral damage and battery usage: $o_1\!\succ\!o_2\!\succ\!o_3$. Context $c_2$ prioritizes minimizing coral damage, followed by depositing the sample and battery usage: $o_2\!\succ\!o_1\!\succ o_3$. Context $c_3$ prioritizes battery usage, then depositing the sample, and lastly minimizing coral damage: $o_3\!\succ\!o_1\!\succ\!o_2$.

States where the agent carries the samples and is around corals are mapped to the coral context ($c_2$). Locations with eddy currents are assigned to context $c_3$. All remaining states are mapped to $c_1$. The meta-ordering of these contexts is $\Omega \triangleq c_2 \succ c_1 \succ c_3$. 

\paragraph{Semi-Autonomous Taxi} We modify the semi-autonomous driving domain from~\cite{wray2015multi} to consider three objectives and multiple contexts. The agent operates under three objectives: quickly dropping off passengers ($o_1$), maximizing travel on autonomy-enabled roads ($o_2$), and minimizing passenger discomfort by avoiding potholes ($o_3$). The contexts in this environment are: \emph{\{$c_1$: urban transit, $c_2$: self-driving, $c_3$: rough terrain\}}. Context $c_1$ prioritizes passenger drop-off time over comfort and traveling on autonomy-enabled roads, $o_1\!\succ\!o_3\!\succ o_2$. Context $c_2$ prioritizes travel on autonomy-enabled roads followed by drop-off and comfort: $o_2\!\succ\!o_1\!\succ\!o_3$. Context $c_3$ prioritizes minimizing discomfort followed by drop-off and autonomy-enabled travel: $o_3 \succ o_1 \succ o_2$. 
States where the taxi is on autonomy-enabled roads are mapped to context $c_2$, while those going over potholes when a passenger is onboard, are mapped to $c_3$. All other states are assigned to the urban transit context ($c_1$) that prioritizes passenger drop-off. The context ordering for this domain is $\Omega \triangleq c_2 \succ c_1 \succ c_3$.

\paragraph{Package Delivery in Warehouse} The robotic agent in a warehouse~\cite{gao2022two} operates under three objectives: quick package delivery ($o_1$), minimizing damage from slippery tiles ($o_2$), and reducing inconvenience to human workers by avoiding narrow corridors ($o_3$). The contexts in this environment are: \emph{\{$c_1$: normal operation, $c_2$: caution zone, $c_3$: worker zone\}}. Context $c_1$ prioritizes quick package delivery over minimizing slip damage and reducing inconvenience to human workers: $o_1 \succ o_2 \succ o_3$. Context $c_2$ prioritizes minimizing slip damage, followed by quick package delivery and minimizing inconvenience: $o_2 \succ o_1 \succ o_3$. Context $c_3$ prioritizes reducing inconvenience to human workers, followed by optimizing package delivery and minimizing slip damage: $o_3 \succ o_1 \succ o_2$. 
States located areas with slippery tiles are mapped to the caution zone ($c_2$), while those involving navigating through narrow corridors with human workers are mapped to $c_3$. All remaining states are designated to $c_1$. Ordering over contexts for this domain is $\Omega \triangleq c_2 \succ c_1 \succ c_3$.

\paragraph{Expert trajectories for inference} We simulate an expert with complete knowledge of the CLMDP. For each problem instance, 10 trajectories are sampled, with random start states.

\begin{figure}[t]
    \centering
    \begin{subfigure}[t]{\linewidth}
    \centering
    \includegraphics[width=0.72\linewidth]{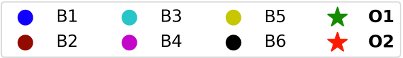}
\includegraphics[width=0.72\linewidth,trim={0.3cm 0 0.2cm 0},clip]{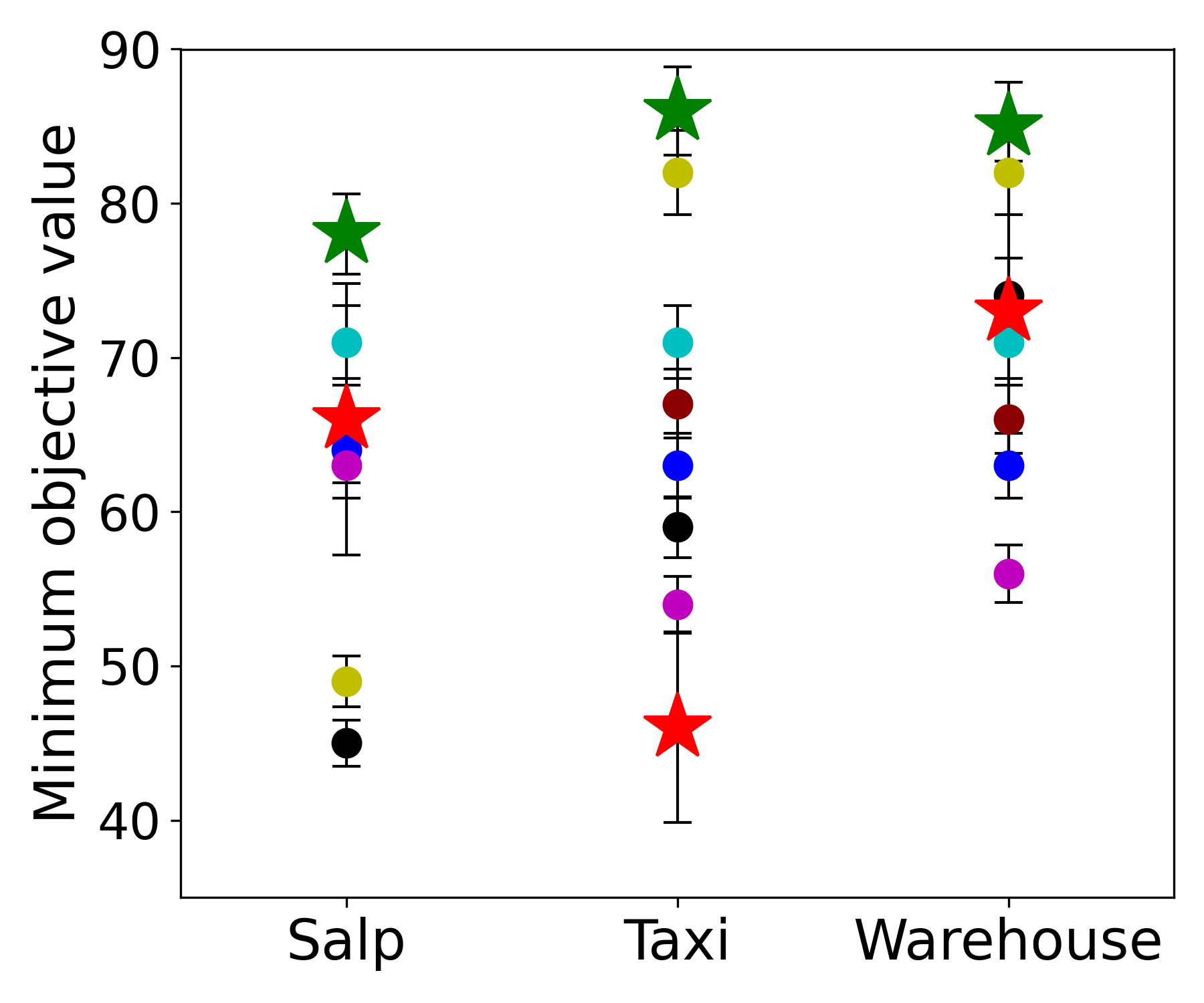}
    \end{subfigure}
    \caption{Minimum objective value from each technique averaged over 100 trials in five instances of each domain.}
\label{fig:min_percentile_consistency}
\end{figure}

\begin{table*}[t]

\caption{Percentage conflicts and task completion for multi-context techniques, averaged over $100$ trials in five instances.}
	\resizebox{7.0in}{!} {
\begin{tabular}{|c|cc|cc|cc|}
\hline
& \multicolumn{2}{c|}{Salp Domain}                  & \multicolumn{2}{c|}{Warehouse Domain}             & \multicolumn{2}{c|}{Taxi Domain}                  \\ \cline{2-7} 
\multirow{-2}{*}{Technique}                & \multicolumn{1}{c|}{\%Conflicts} & \%Goal Reached & \multicolumn{1}{c|}{\%Conflicts} & \%Goal Reached & \multicolumn{1}{c|}{\%Conflicts} & \%Goal Reached \\ \hline
B4: LMDP for contexts & \multicolumn{1}{c|}{$ 40.4\pm 12.1$}        & $39.2\pm 3.5 $          & \multicolumn{1}{c|}{$ 34.2\pm 4.5$}         & $ 28.3\pm 5.2$           & \multicolumn{1}{c|}{$ 13.1\pm 3.2$}         & $ 88.7\pm 9.6$           \\ \hline
B5: Yang et. al, 2019~\cite{yang2019generalized} & \multicolumn{1}{c|}{$28.7 \pm 4.6$}        & $62.4 \pm 7.1$          & \multicolumn{1}{c|}{$11.2 \pm 3.3$}         & $94.1 \pm 5.1$           & \multicolumn{1}{c|}{$4.2 \pm 3.3$}         & $98.6 \pm 13.4$           \\ \hline
B6: Contextual planning w/o resolver                                   & \multicolumn{1}{c|}{$19.8 \pm 5.3$}        & $88.4 \pm 8.1$           & \multicolumn{1}{c|}{$34.2 \pm 7.4$}        & $83 \pm 6.2$             & \multicolumn{1}{c|}{$6.7 \pm 1.5$}         & $95.2 \pm 14.1$           \\ \hline
O1: Contextual planning w/ resolver                                    & \multicolumn{1}{c|}{$0 \pm 0$}           & $100 \pm 0$            & \multicolumn{1}{c|}{$0 \pm 0$}           & $100 \pm 0$            & \multicolumn{1}{c|}{$0 \pm 0$}           & $100 \pm 0$            \\ \hline
O2: Contextual planning w/ resolver \& learned $\mathcal{Z}$                                  & \multicolumn{1}{c|}{$0 \pm 0$ }           & $97.2  \pm 1.72$          & \multicolumn{1}{c|}{$0 \pm 0$ }           & $96.4 \pm 2.24$            & \multicolumn{1}{c|}{$0 \pm 0$ }           & $62.8 \pm 35.65     $       \\ \hline
\end{tabular}}
\label{tab:stats_table}
\end{table*}
\section{Results}
We evaluate the effectiveness of various approaches for contextual multi-objective decision making in terms of: (1) how well a technique can balance the trade-off between different objective values, measured by minimum value across objectives, and (2) validity of the resulting policy, measured in terms of number of conflicts detected and resolved.  
Additionally, we validate our approach using a mobile robot in an indoor setup of warehouse domain (Figure~\ref{fig:trajectory-illustration}). 

\paragraph{Balancing trade-offs between objectives} A heat map showing the performance of each technique, across various objectives and domains, is presented in Figure~\ref{fig:heatmap_all_domains}. Darker shades indicate higher (better) objective values. The approach from Yang et. al, (2019)~\cite{yang2019generalized} occasionally surpasses our approach in individual objectives, but it does not perform consistently well across all domains. Our approach performs consistently well, across objectives and domains, when the state-context mapping $\mathcal{Z}$ is known (O1). 

While some baseline techniques perform well in certain objectives, even outperforming our approach, they tend to score lower on others. In a multi-objective setting, the performance of a technique must be evaluated based on how well it can perform \emph{across all objectives}. We evaluate this performance trade-off across objectives via a percentile comparison showing the minimum objective value. 
We compare the lowest-performing objective value, across all objectives, normalized against the maximum achievable reward in that objective. This value is averaged over 100 trials,  across five instances in each domain.
Figure~\ref{fig:min_percentile_consistency} shows the percentile comparison showing the minimum objective value for each technique across all domains. Note that our approach consistently achieves the highest minimum objective value. Additionally, we also evaluate our second approach, contextual planning with resolver with a learned $\mathcal{Z}$ (O2), where the state-context mapping is inferred from expert trajectories (as described in Section~\ref{sec:learning_Z}). While this approach shows weaker performance, it should be noted that this is due to inaccuracies in predicting the context with limited expert trajectories, affecting both the objective ordering and the associated reward function. The performance with learned $\mathcal{Z}$ will likely improve with the availability of additional data.

\paragraph{Resolving conflicts}  
Table~\ref{tab:stats_table} provides a comparative analysis of the percentage of conflicts (cycles) and task completions for techniques that plan using \emph{context} information. 
Our approach (O1) successfully avoids all conflicts and achieves 100\% task completion in every trial across all domains. This consistent performance highlights the effectiveness of our conflict detection and resolution mechanisms (Alg.~\ref{alg:policy_conflict_checker} and~\ref{alg:policy_conflict_resolver}). 
Our approach  with learned $\mathcal{Z}$ (O2) does not always reach the goal, even in the absence of conflicts in the policy. This is due to \emph{incorrect attribution of contexts}, which leads to suboptimal policies that hinder the agent's ability to reach the goal state. When context partitions are incorrectly assigned, 
Assumption~\ref{assumption:policy-exist} does not hold. By Defn.~\ref{def:conflict}, a policy is said to have a conflict only if the goal reachability is zero. Some policies may have a non-zero (but less than one) probability of reaching the goal. Thus, some conflict-free policies may not always reach the goal during execution.

\begin{figure}[t]
    \centering
   \includegraphics[width=0.94\linewidth]{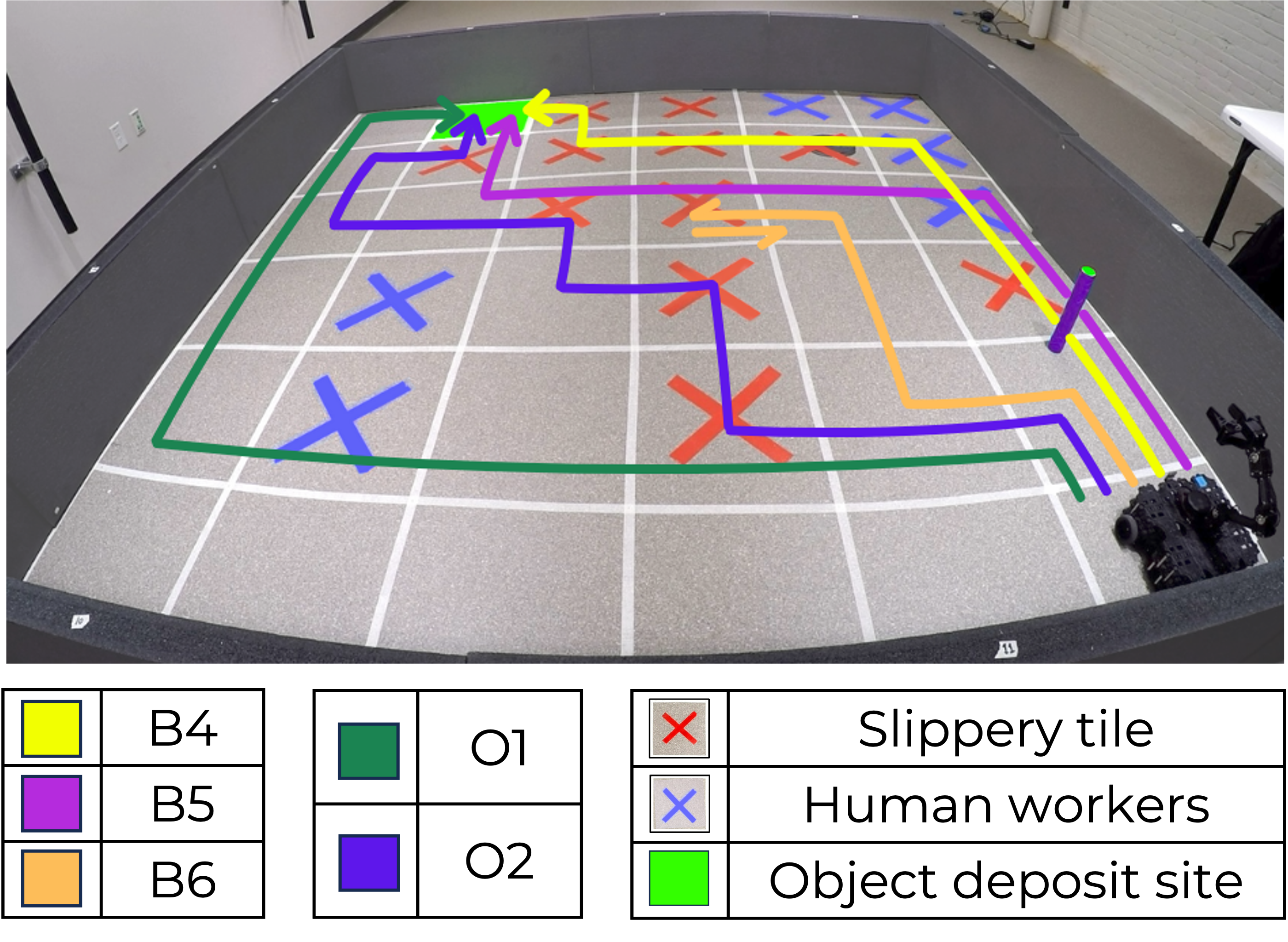}
    \caption{Comparison of the paths taken by our warehouse agent during delivery task in our indoor setup.}
    \label{fig:trajectory-illustration}
\end{figure}

\begin{table}[t]
\caption{Normalized objective values from hardware experiments in the warehouse domain, averaged over five trials.}
\begin{tabular}{|c|c|c|c|}
\hline
Technique                                                                   & $o_1$ & $o_2$ & $o_3$ \\ \hline \hline
B4: LMDP for contexts                                                                   & 0.84   & 0.72 & 0.94  \\ \hline
B5: Yang et. al, 2019 [37]                                                 & 0.48   & 0.43  & 0.83  \\ \hline
\begin{tabular}[c]{@{}c@{}}B6: Contextual planning w/o resolver\end{tabular} & 0.0    & 0.0    & 0.0    \\ \hline
\begin{tabular}[c]{@{}c@{}}O1: Contextual planning w/ resolver\end{tabular}  & 0.98  & 0.90  & 0.93   \\ \hline
\begin{tabular}[c]{@{}c@{}}O2: Contextual planning w/ resolver\\ \& learned $\mathcal{Z}$\end{tabular}  & 0.73  & 0.88  & 0.99   \\ \hline
\end{tabular}
\label{tab:hardware_results}
\end{table}

\paragraph{Evaluation using a mobile robot}  
We conduct a series of experiments using a TurtleBot in an indoor warehouse domain setup. The robot autonomously collects and delivers an object using a LiDAR and a map of the area for active localization so as to determine its state and execute actions based on its computed policy. Figure~\ref{fig:trajectory-illustration} shows the trajectories corresponding to different techniques. Table~\ref{tab:hardware_results} shows the normalized objective values. Our contextual planning approach performs well across all objectives. Without conflict resolver, the robot does not reach the goal, emphasizing the importance of conflict resolver module. Overall, the results demonstrate the method's applicability in real-world robotic tasks with context-specific preferences over multiple objectives.

\section{Conclusion}
This paper presents contextual lexicographic MDP, a framework for multi-objective planning in scenarios where the relative preferences between objectives and the associated reward functions are determined by the context. We also present an algorithm to solve the CLMDP, by combining independently computed policies from different contexts into a global policy that is context-aware and cycle-free. We also analyze the algorithm's correctness and complexity. Empirical evaluations in simulation and using a mobile robot demonstrate the effectiveness of our approach in balancing the performance trade-offs across objectives, under multiple contexts, and the feasibility of applying our approach in the real world. 

In the future, we aim to extend our algorithm to support non-stationary state-context mapping. This work addresses the technical challenges in planning with context-specific objective orderings, assuming this information is known or can be inferred from expert data. How to define a context for a problem and how to define a mapping between objective orderings and contexts extend beyond technical considerations and into ethical challenges. Prioritizing one objective over another are often based on subjective input from domain experts and we acknowledge that these decisions carry significant ethical implications that cannot be overlooked.                                                         

\begin{acks}
This work was supported in part by ONR grant N00014-23-1-2171. We thank Akshaya Agarwal for their help with robot experiments.
\end{acks}

\newpage
\bibliographystyle{ACM-Reference-Format} 
\balance
\bibliography{references}
\end{document}